\newtheorem{theorem}{Theorem}[section]
\newtheorem{prop}[theorem]{Proposition}
\newtheorem{problem}{Problem}
\newtheorem{rem}[theorem]{Remark}
\title{\LARGE \bf
Conformalized Non-uniform Sampling Strategies \\ for Accelerated Sampling-based Motion Planning
}
\author{Shubham Natraj \and Bruno Sinopoli \and Yiannis Kantaros%
\thanks{The authors are with Department of Electrical and Systems Engineering, Washington University in St. Louis, MO, 63108, USA. Emails:
        {\tt\small \{n.shubham,bsinopoli,ioannisk\}@wustl.edu}%
}
}
\begin{document}
\bstctlcite{IEEEexample:BSTcontrol}

\maketitle
\thispagestyle{empty}
\pagestyle{empty}

\begin{abstract}
Sampling-based motion planners (SBMPs) are widely used to compute dynamically feasible robot paths. However, their reliance on uniform sampling often leads to poor efficiency and slow planning in complex environments. We introduce a novel non-uniform sampling strategy that integrates into existing SBMPs by biasing sampling toward `certified' regions. These regions are constructed by (i) generating an initial, possibly infeasible, path using any heuristic path predictor (e.g., A* or vision-language models) and (ii) applying conformal prediction to quantify the predictor's uncertainty. This process yields prediction sets around the initial-guess path that are guaranteed, with user-specified probability, to contain the optimal solution. To our knowledge, this is the first non-uniform sampling approach for SBMPs that provides such probabilistically correct guarantees on the sampling regions. Extensive evaluations demonstrate that our method consistently finds feasible paths faster and generalizes better to unseen environments than existing baselines.
\end{abstract}

\section{Introduction}\label{sec:intro}

Sampling-based motion planners (SBMPs) are powerful methods for designing dynamically feasible robot paths in high-dimensional configuration spaces, particularly for reach-avoid tasks \cite{kavraki1996probabilistic,lavalle1998rapidly,karaman2011sampling,janson2015fast,wang2025motion}. They incrementally construct graph structures—typically trees—by sampling and adding nodes, where each node represents a robot configuration and edges model control actions. Despite their probabilistic completeness and asymptotic optimality guarantees, SBMPs often suffer from poor sample efficiency and slow planning, especially in cluttered environments, due to their reliance on uniform sampling of robot configurations.

To address this limitation, we propose a non-uniform sampling strategy that biases sampling toward promising regions likely to contain optimal paths. A key challenge is designing \textit{certified sampling regions} that contain solutions with high probability—otherwise, the planner may waste time exploring invalid directions, further slowing the process. Our strategy addresses this via two key ideas. 
First, we generate a fast initial path from start to goal using any existing heuristic planner, such as A* \cite{hart1968formal}, Vision-Language Models (VLMs) \cite{song2025vl}, or motion planning networks \cite{qureshi2020motion}. This path need not be valid but it serves as the backbone of our sampling regions; for example, A* typically ignores system dynamics, while VLMs and planning networks may produce paths that collide with obstacles. Second, we employ conformal prediction (CP), a statistical framework for uncertainty quantification in black-box models \cite{ shafer2008tutorial, angelopoulos2023conformal}, to quantify uncertainty of these path predictors and construct prediction sets around this initial path. These sets guarantee, with user-specified probability, the existence of the optimal path within them. We then bias sampling toward these certified regions. To our knowledge, this is the first non-uniform sampling strategy with theoretical guarantees that the biased regions contain the optimal path with a desired probability. We integrate our CP-based strategy into $\text{RRT}^*$ and demonstrate through extensive evaluations that it significantly accelerates path computation and generalizes better to unseen environments than existing baselines \cite{ichter2018learning, lavalle2001randomized}. This performance gap widens as the complexity of the environment increases. \textcolor{black}{Finally, we note that although we focus on $\text{RRT}^*$ in this work, our approach is broadly applicable to other SBMPs 
\cite{kavraki1996probabilistic,lavalle1998rapidly,karaman2011sampling,janson2015fast,wang2025motion}
} %

\textbf{Related Works:} Several non-uniform sampling strategies have been proposed to improve planning efficiency.
\textbf{(i)} \emph{Heuristic approaches} have been developed to bias the sampling process toward promising regions by leveraging task-specific information (e.g., goal region), environment geometry, or path cost heuristics 
\cite{lavalle2001randomized,hsu1998finding,urmson2003approaches,hsu2003bridge,van2005using,yang2004adapting,qureshi2016potential,luo2021abstraction}. While these methods can offer significant performance gains, they typically require expert-designed heuristics or parameter tuning. Moreover, empirical results demonstrate that their effectiveness may degrade in environments that deviate from their expected operating conditions \cite{urmson2003approaches}. 
\textbf{(ii)} \textit{Informed sampling methods} have also been proposed to accelerate convergence to an optimal path by leveraging the current best path \cite{gammell2018informed}. However, these methods, unlike ours, require an initial \textit{feasible} solution before refining the sampling distribution. As such, they inherit the same limitations as standard SBMPs in cluttered environments, where finding an initial solution is already challenging. These methods are complementary to ours and could potentially be integrated with our approach once an initial path is found.
\textbf{(iii)} \emph{Learning-enabled sampling strategies} have also been proposed recently that leverage prior planning experiences to train neural networks that guide sampling in novel planning problems \cite{burns2005sampling,baldwin2010non,zucker2008adaptive,berenson2012robot,wang2020neural,molina2020learn,huang2024neural,qureshi2018deeply,ichter2018learning}. 
Although these approaches result in significant performance improvements, they often require extensive data collection and computationally expensive training procedures. Our approach avoids these issues as it can leverage any planner (e.g., A*) to generate an initial, and potentially infeasible, path and only requires a small calibration set for CP. Moreover, unlike many learning-based methods, we demonstrate stronger empirical generalization to novel environments. We emphasize again that, unlike all the works discussed in (i)–(iii), our proposed sampling strategy identifies \textit{certified regions}---regions that are guaranteed to contain the optimal solution with a user-specified probability—to guide the sampling process resulting in significant performance gains. \textbf{(iv)} Finally, CP has been applied to a variety of autonomy tasks for uncertainty quantification, including object tracking \cite{su2024collaborative}, perception \cite{dixit2024perceive}, trajectory prediction \cite{lindemann2023safe}, and language-instructed robot planning \cite{wang2024probabilistically}. To our knowledge, this work presents the first application of CP to \textcolor{black}{quantify uncertainty of heuristic path predictors (e.g., A*, VLM-based planners, and planning networks) serving as the basis for developing our non-uniform sampling strategies for kinodynamic SBMPs.}

\textbf{Contributions:} The main contributions of this paper are as follows.
\textit{First}, we introduce the first certified non-uniform sampling strategy, enabled by CP, that guides the sampling process toward regions guaranteed to contain the optimal path with a user-specified probability.
\textit{Second}, we develop the first CP framework to quantify the uncertainty of any heuristic path predictor; this uncertainty quantification outcome is used to design the proposed sampling strategy.
\textit{Third}, unlike prior learning-based methods that require training task-specific models, our framework is training-free and uses only a small calibration set to apply CP. Additionally, our approach is versatile and agnostic to the employed path predictor. %
\textit{Fourth}, we present extensive comparative experiments demonstrating that our method consistently outperforms existing baselines in terms of runtime to compute a feasible solution and generalization to unseen environments. %

\section{Problem Formulation}\label{sec:pr}

Consider a mobile robot with dynamics  
\begin{equation}\label{eq:rdynamics}
    \dot{\bbx}(t)=f(\bbx(t),\bbu(t)) 
\end{equation}
where $\bbx(t)\in\ccalX\subset\mathbb{R}^n$, $n\geq 2$, denotes the robot state (e.g., position and heading) and $\bbu(t)\in\ccalU$ stands for the control input at time $t$. Also, let $\ccalX_{\text{obs}} \subset \ccalX$ denote the obstacle/unsafe space and $\ccalX_{\text{free}}=\ccalX\setminus\ccalX_{\text{obs}}$ denote the obstacle-free space. Consider also an initial state $\bbx_{\text{init}}\in\ccalX$ and a goal region $\ccalX_{\text{goal}}\subset\ccalX_{\text{free}}$. For brevity, we denote a planning problem as a tuple $\mathcal{M} = (\ccalX, \ccalX_{\text{free}},\bbx_{\text{init}},\ccalX_{\text{goal}})$. We assume that planning problems $\ccalM$ are sampled from a distribution $\mathcal{D}$. \textcolor{black}{This distribution is unknown but we assume that we can draw i.i.d. samples from it.} %
The planning problem that this paper addresses can be summarized as follows \cite{karaman2011sampling}:

\begin{problem}\label{pr:pr1}
Given a planning problem $\mathcal{M} \sim \mathcal D$, compute a control function $\bbu:[0,1]\rightarrow \ccalU$ such that the corresponding unique trajectory $\rho:[0,1]\rightarrow \ccalX$ determined by solving \eqref{eq:rdynamics} satisfies (i) $\rho(0)=\bbx_{\text{init}}$; (ii) $\rho(1)\in\ccalX_{\text{goal}}$; (iii) $\rho(t)\in\ccalX_{\text{free}}$, for all $t\in [0,1]$; and minimizes a motion cost function $J(\bbx)=\int_0^1 g(\bbx(t))$ (e.g., length of trajectory).
\end{problem}

As discussed in Section~\ref{sec:intro}, several SBMP approaches have been proposed to address Problem~\ref{pr:pr1}. These methods construct graphs over the state space $\ccalX$ by sampling and adding nodes that represent robot states $\bbx$, with edges modeling the control actions required to reach them. As the number of samples increases, the probability of finding a solution approaches one. However, the efficiency of SBMPs heavily depends on the chosen sampling strategy. In this work, we focus on designing non-uniform sampling strategies to accelerate the computation of feasible paths using SBMPs.

\section{Methodology}
\textcolor{black}{In this section, we develop a non-uniform sampling strategy to accelerate SBMPs  addressing Problem \ref{pr:pr1}. Specifically, in Section \ref{sec:meth-cp}, we leverage Conformal Prediction (CP) to quantify the uncertainty of path predictors that can quickly but approximately solve Problem \ref{pr:pr1}. This uncertainty quantification yields prediction sets of trajectories that include the optimal solution with a user-defined probability. In Section \ref{sec:proposed}, we use these prediction sets to develop a novel non-uniform sampling strategy for SBMPs that guides sampling toward regions likely to contain an optimal trajectory. This CP-driven sampling strategy is integrated within RRT* \cite{karaman2011sampling}, resulting in our proposed approach, CP-RRT*.}

\subsection{Uncertainty Quantification of Path Predictors}
\label{sec:meth-cp}

\begin{algorithm}[t]
\caption{CP Calibration}
\label{alg:cal-path}
\textbf{Input:} $N_{\text{cal}}, \alpha$\\
\textbf{Output:}  $\hat q$
\begin{algorithmic}[1]
\State $\ccalS_{\mathrm{cal}} =\emptyset$\label{algcpline:init}
\For{$i = 1 \to N_{\text{cal}}$}

    \State sample $\mathcal{M}_i \sim \mathcal{D}$ \label{algcpline:sample-prob}

    \State $\rho_{i} \;\gets\; \textsc{GetOptimalSolution}\bigl(\mathcal{M}_i)$ \label{algcpline:get-optimal-soln}
    
    \State $\textsc{Append}(\ccalS_{\mathrm{cal}},\,(\mathcal{M}_i , \rho_{i}))$ \label{algcpline:append-cal-pair}
    
\EndFor
\State $\mathcal{S} \;=\; \Bigl\{s(\sigma(\ccalM_i) , \rho_i) \;\Big|\; \forall i \in [1,\dots,N_{\text{cal}}]\Bigr\}$ \label{algcpline:compute-scores}
\State $r \gets \lceil(1-\alpha)(N_{\text{cal}}+1)\rceil$ \label{algcpline:compute-r}
\State $\hat q \gets (\textsc{Sort}(\mathcal{S}))_r$ \label{algcpline:compute-qhat}
\State \Return $\hat q$
\end{algorithmic}
\end{algorithm}

\textcolor{black}{Consider any path predictor, denoted by $\sigma: \ccalM\rightarrow\ccalX^{L_p}$, which given a planning problem $\ccalM$ returns a path $\bbp_{1:L_p} = \bbp(1), \bbp(2), \dots, \bbp(L_p)$ where each state $\bbp(k) \in \ccalX$ for $k\in\{1, \dots, L_p\}$, $\bbp(1) = \bbx_{\text{init}}$, $\bbp(L_p) \in \ccalX_{\text{goal}}$, and $L_p\in\mathbb{N}_{+}$. This path predictor can be implemented using existing approaches such as A*, VLM-based path planners \cite{song2025vl}, or motion planning networks \cite{qureshi2020motion}.
Due to imperfections of the employed predictors, the generated path may be infeasible, in the sense that (i) some states $\bbp(k)$ may lie outside the obstacle-free space $\ccalX_{\text{free}}$, or (ii) there may be no control input $\bbu$ that drives the system from $\bbp(k)$ to $\bbp(k+1)$ according to \eqref{eq:rdynamics}. The path may also be approximate, meaning that $\bbp(k)$ may belong to a lower-dimensional space $\hat{\ccalX}\subset\ccalX$; e.g., if $\bbx(t) \in \mathbb{R}^3$ encodes both position and heading, $\bbp(k)\in \mathbb{R}^2$ may include only the position. Note that the index $k$ is used to point to the $k$-th state in $\bbp_{1:L_p}$ and is different from $t$.} 

\textcolor{black}{
Next, we quantify the uncertainty of the predictor $\sigma$ to account for such imperfections. Specifically, given a predicted path $\bbp_{1:L_p}$, we design a \textit{prediction set} $\ccalC(\bbp_{1:L_p})$ of trajectories such that the optimal one, as per Problem \ref{pr:pr1}, lies entirely in $\ccalC(\bbp_{1:L_p})$ with user-specified probability $1-\alpha \in [0,1)$, i.e.,
\begin{equation}
\label{eqn:guarantee}
    \mathbb{P}\Bigl( \rho \in  \ccalC(\bbp_{1:L_p})\Bigr) \;\ge\; 1-\alpha \;\; %
\end{equation}
where $\rho : [0,1] \rightarrow \ccalX$ is the optimal trajectory. This set will be used in Section \ref{sec:proposed} to accelerate SBMPs.}

\textcolor{black}{Our approach leverages conformal CP to compute prediction sets and is agnostic to the specific path predictor $\sigma$. Applying CP requires:
(i) a calibration dataset $\ccalS_{\text{cal}}={\{\ccalM_i,\rho_{i}\}}_{i=1}^{N_{\text{cal}}}$ consisting of planning problems $\ccalM_i \sim \ccalD$ paired with their optimal solutions $\rho_i$, and
(ii) a \textit{non-conformity score (NCS)} that quantifies the error in the predictor’s outputs. Given (i)–(ii) and a new planning problem $\ccalM \sim \ccalD$ with unknown feasible/optimal solutions, CP produces a prediction set satisfying \eqref{eqn:guarantee}. In what follows, we describe this process in detail; see Alg. \ref{alg:cal-path}.}

\textcolor{black}{
\textbf{Calibration Dataset:} Consider any new planning problem $\ccalM\sim\ccalD$ with unknown feasible/optimal solutions and let $\bbp_{1:L_p}$ be the solution predicted by $\sigma$. To construct $\ccalC(\bbp_{1:L_p})$ we first build a calibration dataset, $\ccalS_{\text{cal}}=\{\ccalM_i,\rho_i\}_{i=1}^{N_{\text{cal}}}$, by sampling $N_{\text{cal}}$ planning problems $\ccalM_i$ from $\ccalD$. Then using any existing optimal kinodynamic planner, we compute their corresponding optimal trajectory, $\rho_i : [0,1] \rightarrow \ccalX$. %
}

\textbf{Non-Conformity Score (NCS):} %
We define the NCS as a function $s : \ccalX^{L_p} \times \mathcal{F} \to \mathbb{R}$ that measures the error/distance between a path $\bbp_{1:L_p}$ predicted by $\sigma$ and an optimal trajectory $\rho \in \mathcal{F}$, where $\mathcal{F}$ denotes the space of continuous functions (i.e., trajectories) $\rho:[0,1]\to\mathcal{X}$. Specifically, given $\bbp_{1:L_p}$ and $\rho$, we define the NCS as follows:

\begin{equation}
\label{eqn:score}
    s(\bbp_{1:L_p},\rho) \;=\; 
\max_{k\in\{1,\dots,L_p\}}\;\max_{\substack{\rho(n) \in \ccalV(k), n\in[0,1]}} 
\Bigl\|\bbp_{1:L_p}(k) - \rho(n)\Bigr\|,
\end{equation}
\textcolor{black}{where $\ccalV(k)$ is the Voronoi cell associated with $\bbp(k)$ and generated by considering all states in the path  $\bbp_{1:L_p}$, i.e.,}
\begin{align}
\label{eqn:voronoi}
    \ccalV(k) = &\bigl\{\bbq \in \ccalX \;\big|\; 
    \|\bbq - \bbp_{1:L_p}(k)\| \le \|\bbq - \bbp_{1:L_p}(\ell)\|,\\& ~~~~~~\forall \ell \in \{1,\dots,L_p\}\setminus\{k\}\bigr\}.\nonumber
\end{align}
\textcolor{black}{The inner maximization in \eqref{eqn:score} computes the maximum Euclidean distance between a fixed state $\bbp(k)$ and all states  along the trajectory $\rho$ that lie within the Voronoi cell $\ccalV(k)$ of $\bbp(k)$. Then the outer maximization computes the maximum of such distances across all states $\bbp(k)$.} Observe that the NCS is designed so that every state along the optimal trajectory is assigned to a Voronoi region of some predicted state. In other words, for every $n\in[0,1]$, there exists a unique $k\in\{1,\dots,L_p\}$, such that $\rho(n)\in\ccalV(k)$. This construction ensures no states in the optimal trajectory are omitted in the score computation. Computation of this score function is shown in Figure \ref{fig:score}; see also Rem. \ref{rem:calibNCS}.

\begin{figure}[t]
\begin{subfigure}[b]{0.49\columnwidth}
      \centering
        \includegraphics[width=\linewidth]{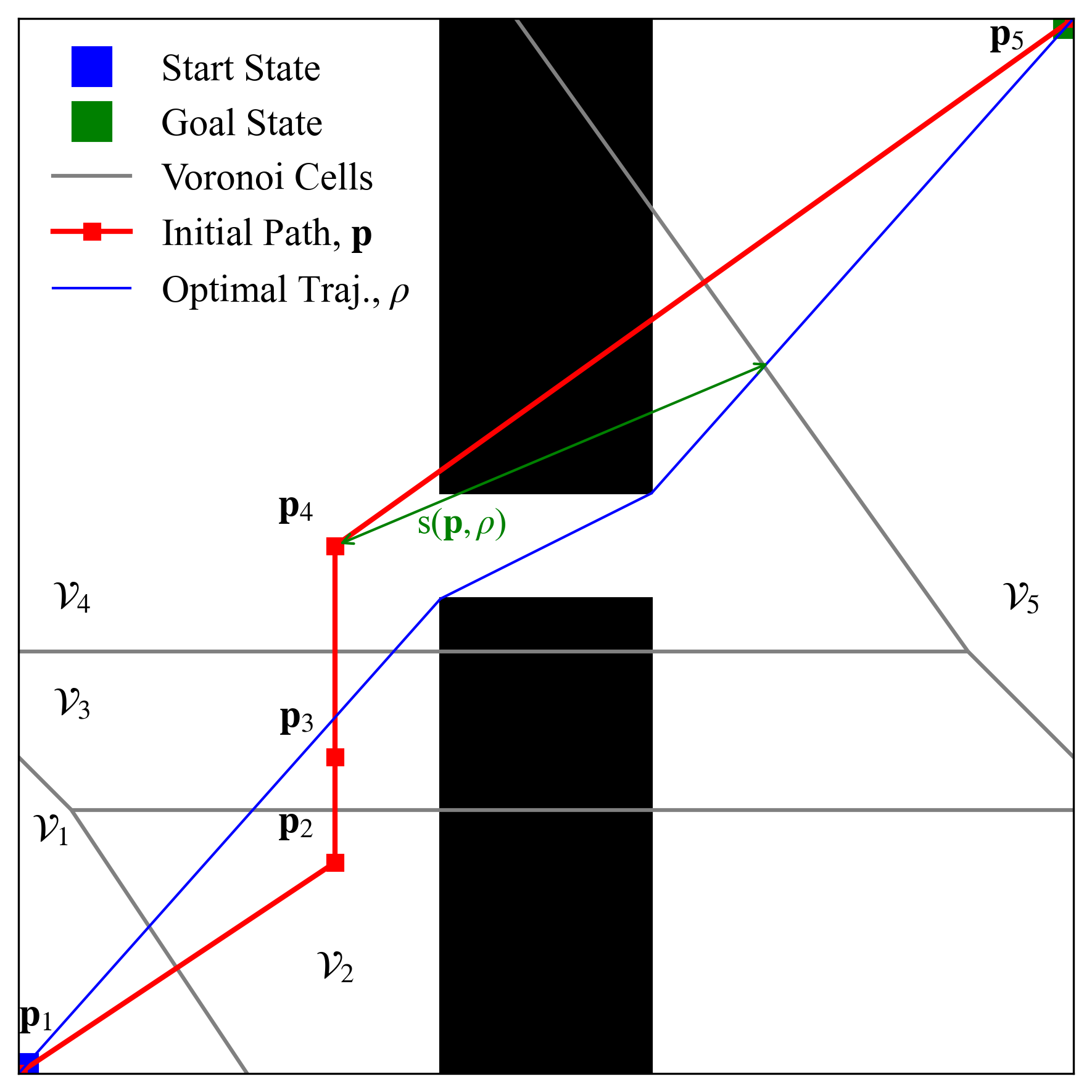}
        \caption{Nonconformity score.}
        \label{fig:score}
  \end{subfigure}
  \begin{subfigure}[b]{0.49\columnwidth}
      \centering
        \includegraphics[width=\linewidth]{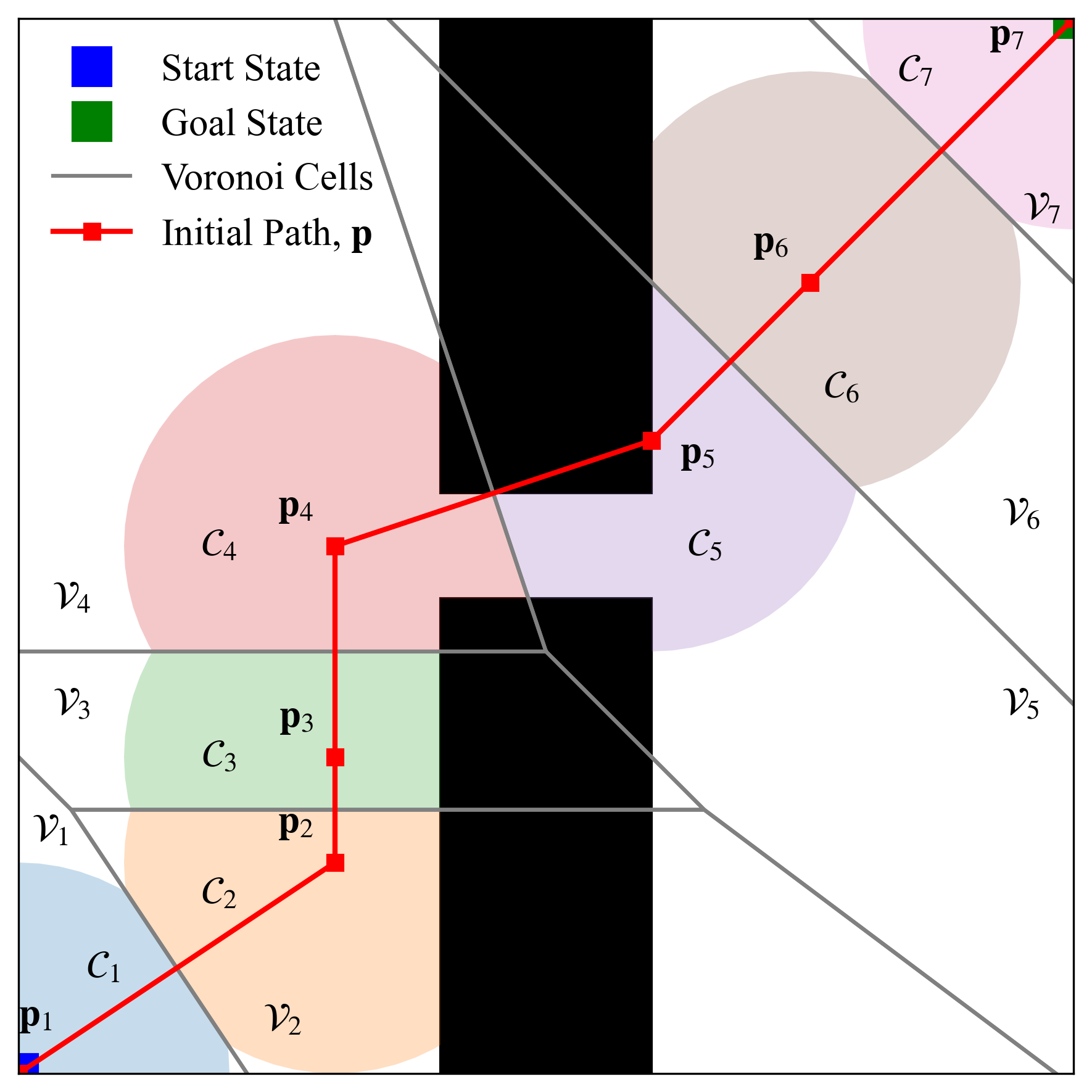}
        \caption{Prediction regions.}
        \label{fig:pred_regions}
  \end{subfigure}
\caption{
Illustration of the NCS and the corresponding prediction regions. 
In (a), the NCS is computed as the maximum distance between each predicted state $\bbp(k)$ and the portion of the optimal trajectory $\rho$ (blue) within its Voronoi cell $\ccalV(k)$. 
In (b), the resulting point-wise prediction regions $\ccalC_k$ (shaded) are shown as the intersections of balls of radius $\hat q$ centered at $\bbp(k)$ with their respective Voronoi regions. 
The start and goal states are indicated in blue and green, respectively, and obstacles are shown in black.
}%
\end{figure}

\textbf{Calibration Step:} Next, for each calibration problem $\ccalM_i$, we apply the path predictor to compute the corresponding predicted path $\sigma(\ccalM_i)$ and  evaluate the corresponding NCS $s(\sigma(\ccalM_i) , \rho_{i})$ in \eqref{eqn:score} [line \ref{algcpline:compute-scores}, Alg. \ref{alg:cal-path}].  Then, we find the $(1-\alpha)$-th empirical quantile of these NCSs denoted by $\hat{q}$ [lines \ref{algcpline:compute-r},  \ref{algcpline:compute-qhat}, Alg. \ref{alg:cal-path}], which is used to design sets satisfying \eqref{eqn:guarantee}.

\textbf{Prediction Sets:} Given the planning problem $\mathcal{M}$ and its predicted solution $\bbp_{1:L_p}=\sigma \bigl(\mathcal{M})$, we construct the following \emph{prediction set} of trajectories:
\begin{equation}
\label{eqn:conf_pred_region}
    \mathcal{C}\bigl(\bbp_{1:L_p}\bigr) 
    \;=\;
    \Bigl\{
   \rho \in\ccalF\Big|\; s\bigl(\bbp_{1:L_p},\rho\bigr) \;\le\; \hat{q}
    \Bigr\},
\end{equation}
\textcolor{black}{By construction of the NCS, we have that if the optimal trajectory satisfies $s(\bbp_{1:L_p},\rho) \leq \hat q$, then we have that the entire trajectory $\rho$ lies within $\ccalC(\bbp_{1:L_p})$.} Also, by standard CP argument, we have that the prediction set in \eqref{eqn:conf_pred_region} satisfies \eqref{eqn:guarantee}, implying that the optimal trajectory $\rho$ for $\ccalM$ will lie entirely in $\mathcal{C}\bigl(\bbp_{1:L_p}\bigr)$ with probability at least $1-\alpha$ \cite{angelopoulos2023conformal}.  

\textbf{Decomposition:} Since $\mathcal{C}\bigl(\bbp_{1:L_p}\bigr)$ is a set of trajectories, it is impractical to characterize exactly (and, therefore, use it to guide sampling in Section \ref{sec:proposed}). Hence, we decompose it into point-wise prediction regions. For each state $\bbp(k)$ in the predicted path, we define the associated point-wise prediction set $\ccalC_k$ as the intersection of the ball of radius $\hat q$ centered at $\bbp(k)$ and the Voronoi cell $\ccalV(k)$ defined in \eqref{eqn:voronoi}, i.e.,
\begin{equation}
\label{eqn:decomposed_pred_reg}
    \ccalC_k=\{\bbq\in\ccalV(k)~|~\|\bbq - \bbp_{1:L_p}(k)\|\leq \hat{q} \}.
\end{equation}
Figure \ref{fig:pred_regions} illustrates these  point-wise prediction sets. Then, in Proposition \ref{prop:decomp}, we show that we can re-write the prediction set in \eqref{eqn:conf_pred_region} as follows
\begin{equation}\label{eqn:decompSet}
    \mathcal{C}(\bbp_{1:L_p}) 
    \;=\;
    \Bigl\{z\in\ccalF \;\Big|\; 
    z(n) \in \bigcup_{k=1}^{L_p} \ccalC_k, \;\; \forall n \in [0,1] \Bigr\}.
\end{equation}

\begin{prop}[Set Decomposition]\label{prop:decomp}
The prediction set in \eqref{eqn:conf_pred_region} can be re-written equivalently as in \eqref{eqn:decompSet}.
\end{prop}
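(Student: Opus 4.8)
The plan is to prove \eqref{eqn:decompSet} by establishing the two set inclusions separately, using only two elementary properties of the Voronoi diagram generated by $\bbp_{1:L_p}$: (a) the cells $\ccalV(1),\dots,\ccalV(L_p)$ cover $\ccalX$, so every $\bbq\in\ccalX$ lies in at least one cell; and (b) if $\bbq\in\ccalV(k)$ then $\|\bbq-\bbp(k)\|=\min_{\ell\in\{1,\dots,L_p\}}\|\bbq-\bbp(\ell)\|$, i.e., $\bbp(k)$ is a nearest predicted state to $\bbq$. Both follow directly from \eqref{eqn:voronoi}, and I would state them at the outset since property (b) carries the weight of the argument.

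For the inclusion ``$\subseteq$'', I would take any $\rho\in\ccalF$ with $s(\bbp_{1:L_p},\rho)\le\hat q$ and fix an arbitrary $n\in[0,1]$. By property (a), $\rho(n)$ lies in some cell $\ccalV(k)$; then $\rho(n)$ is one of the points over which the inner maximum in \eqref{eqn:score} ranges, so $\|\bbp(k)-\rho(n)\|\le s(\bbp_{1:L_p},\rho)\le\hat q$. Together with $\rho(n)\in\ccalV(k)$ this gives $\rho(n)\in\ccalC_k$ by \eqref{eqn:decomposed_pred_reg}, hence $\rho(n)\in\bigcup_{k}\ccalC_k$. Since $n$ was arbitrary, $\rho$ belongs to the right-hand side of \eqref{eqn:decompSet}.

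For the reverse inclusion ``$\supseteq$'', given $z\in\ccalF$ with $z(n)\in\bigcup_k\ccalC_k$ for every $n\in[0,1]$, I would bound each term of the double maximum defining $s(\bbp_{1:L_p},z)$. Fix any index $k$ and any $n$ with $z(n)\in\ccalV(k)$. By hypothesis $z(n)\in\ccalC_j$ for some $j$, so $\|z(n)-\bbp(j)\|\le\hat q$; applying property (b) to the cell $\ccalV(k)$ gives $\|z(n)-\bbp(k)\|=\min_\ell\|z(n)-\bbp(\ell)\|\le\|z(n)-\bbp(j)\|\le\hat q$. Taking the maximum first over all admissible $n$ and then over $k$ yields $s(\bbp_{1:L_p},z)\le\hat q$, i.e., $z\in\mathcal{C}(\bbp_{1:L_p})$. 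This step is the crux: it is precisely the nearest-predicted-state property of Voronoi cells that converts ``$z(n)$ lies in \emph{some} ball $\ccalC_j$'' into the per-cell distance bound the score requires.

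The remaining subtlety is the degenerate case in which, for some $k$, no point of the trajectory falls in $\ccalV(k)$, so the inner maximum in \eqref{eqn:score} is taken over the empty set; I would adopt the convention that such a term equals $-\infty$ (or $0$), which is harmless since $\hat q\ge 0$ as an empirical quantile of nonnegative scores, or equivalently replace $\max$ by $\sup$ throughout. Beyond this bookkeeping the proof is routine, so I do not expect a genuine obstacle; the only care needed is to make properties (a)--(b) explicit and to handle the empty-cell convention.
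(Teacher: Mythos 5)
Your proposal is correct and follows essentially the same two-inclusion argument as the paper's proof, which unfolds the definition of the NCS and uses the covering and nearest-point properties of the Voronoi cells. You spell out the reverse inclusion (which the paper dismisses with ``using the same logic'') and the empty-cell convention more carefully, but the underlying approach is identical.
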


\begin{proof}
To prove this result, it suffices to show that any $z$ belonging to the set in \eqref{eqn:conf_pred_region} also belongs to the set in \eqref{eqn:decompSet}, and vice versa.
By construction of the prediction set in ~\eqref{eqn:conf_pred_region} and the NCS in \eqref{eqn:score}, we have that a trajectory $z:[0,1]\to\ccalX$ belongs to $\mathcal{C}(\bbp_{1:L_p})$ if and only if 
\begin{equation}
    \max_{k\in\{1,\dots,L_p\}}\;\max_{\substack{z(n) \in \ccalV(k), n\in[0,1]}} 
    \bigl\|\bbp_{1:L_p}(k) - z(n)\bigr\| \;\le\; \hat q.
\end{equation}
This condition means that if $z$ belongs to the set in \eqref{eqn:conf_pred_region}, then for each $n \in [0,1]$, the state $z(n)$ lies within distance $\hat q$ of some predicted state $\bbp(k)$ whose Voronoi cell contains $z(n)$. In other words, $z(n) \in \bigcup_{k=1}^{L_p} \ccalC_k$ for all $n$, and therefore $z$ belongs to the set in \eqref{eqn:decompSet}. Using the same logic, we can show that if $z(n)\in\bigcup_{k=1}^{L_p}\ccalC_k$ for all $n\in[0,1]$, then $z$ belongs to the set in \eqref{eqn:conf_pred_region} completing the proof.
\end{proof}

\begin{rem} [Calibration Dataset \& NCS]\label{rem:calibNCS}
In practice, constructing calibration datasets using continuous-time optimal trajectories and evaluating the NCS in \eqref{eqn:score} directly on them can be computationally challenging. A tractable alternative is to use approximately optimal solutions (e.g.,, by running SBMPs for a large number of iterations to obtain a discretized trajectory) and then evaluate the NCS on it. 
\end{rem}

\subsection{CP-RRT*: RRT* with CP-driven Sampling Strategy}\label{sec:proposed}

\begin{algorithm}[t]
\caption{CP-RRT*}
\label{alg:cp-RRT*}
\textbf{Input:} $\mathcal{M} = (\ccalX, \ccalX_{\text{free}},\bbx_{\text{init}},\ccalX_{\text{goal}}),\,\hat{q}, \,p_{bias}$ \\
\textbf{Output:} $\mathcal{T}$ 
\begin{algorithmic}[1]
\State $\ccalV \leftarrow \{\bbx_{\text{init}}\}$,\quad $\ccalE \leftarrow \varnothing$,\quad $\mathcal{T} \leftarrow (\ccalV,\ccalE)$ \label{rrtline:init-tree}
\State $\bbp_{1:L_p} \gets \sigma(\ccalM)$ \label{rrtline:get-initial-path}
\State Compute prediction sets $\ccalC_k$ as in \eqref{eqn:decomposed_pred_reg}, $\forall k \in \{1,\dots,L_p\}$\label{rrtline:construct-pred-sets}
\For{$j = 1 \to N$} \label{rrtline:for-loop-start}
    \State $\bbx_{\mathrm{rand}} \leftarrow \textsc{SAMPLE}(\{\ccalC_k\}_{k=1}^{L_p},\; p_{\mathrm{bias}},\; \ccalX_{\mathrm{free}})$ \label{rrtline:sample}
    \State $\bbx_{\mathrm{nearest}} \leftarrow \textsc{Nearest}\bigl(\mathcal{T},\, \bbx_{\mathrm{rand}}\bigr)$ \label{rrtline:nearest}
    \State $\bbx_{\mathrm{new}} \leftarrow \textsc{Steer}\bigl(\bbx_{\mathrm{nearest}},\, \bbx_{\mathrm{rand}}\bigr)$ \label{rrtline:steer}
    \If{$\textsc{ObstacleFree}\bigl(\bbx_{\mathrm{nearest}},\, \bbx_{\mathrm{new}}\bigr)$} \label{rrtline:obstaclefree-check}
        \State $\mathcal{X}_{\mathrm{near}} \leftarrow \textsc{Near}\bigl(\mathcal{T},\, \bbx_{\mathrm{new}},\, r_n\bigr)$ \label{rrtline:near-nodes}
        \State $\bbx_{\mathrm{parent}} \leftarrow \textsc{ChooseParent}\bigl(\mathcal{T},\, \mathcal{X}_{\mathrm{near}},\, \bbx_{\mathrm{new}}\bigr)$ \label{rrtline:choose-parent}
        \State $\mathcal{T} \leftarrow \textsc{Extend}\bigl(\mathcal{T},\, \bbx_{\mathrm{new}},\, \bbx_{\mathrm{parent}}\bigr)$ \label{rrtline:extend-tree}
        \State $\mathcal{T} \leftarrow \textsc{Rewire}\bigl(\mathcal{T},\, \bbx_{\mathrm{new}},\, \mathcal{X}_{\mathrm{near}}\bigr)$ \label{rrtline:rewire}
    \EndIf
\EndFor \label{rrtline:for-loop-end}
\State \Return $\mathcal{T}$ \label{rrtline:return-solution}
\end{algorithmic}
\end{algorithm}

In this section, we provide an overview of CP-RRT*; see Alg.~\ref{alg:cp-RRT*}. CP-RRT* augments RRT* with a non-uniform sampling strategy, driven by the prediction sets in \eqref{eqn:decomposed_pred_reg}; the remaining core RRT* subroutines %
remain unchanged \cite{karaman2011sampling}. As in RRT*, our algorithm incrementally builds a tree $\ccalT = \{\ccalV, \ccalE\}$, where $\ccalV \subseteq \ccalX$ and $\ccalE \subseteq \ccalV \times \ccalV$ are the sets of nodes and edges, respectively [line  \ref{rrtline:init-tree}, Alg. \ref{alg:cp-RRT*}]. These sets are initialized as $\ccalV = \{\bbx_{\text{init}}\}$ and $\ccalE = \emptyset$, and grown incrementally by adding new states to $\ccalV$ and corresponding edges to $\ccalE$, for a user-specified number of iterations, $N$. %

\begin{algorithm}[t]
\caption{SAMPLE}
\label{alg:sample}
\textbf{Input:} $\{\ccalC_k\}_{k=1}^{L_p},\; p_{\mathrm{bias}},\; \ccalX_{\mathrm{free}}$ \\
\textbf{Output:} $\bbx_{\mathrm{rand}}$
\begin{algorithmic}[1]
\State Compute $\ccalA_k = \ccalX_{\text{free}} \cap \ccalC_k,~\forall k \in \{1,\dots,L_p\}$ \label{a2l:create_a}
\State Select $k \in \{1,\dots,L_p\}$ \label{a2l:select_k}
\State Draw $\bbx_{\mathrm{rand}}|k  \sim (1-p_{\mathrm{bias}})\,\mathbb{U}_{\ccalX_{\text{free}}} + p_{\mathrm{bias}}\,\mathbb{U}_{\ccalA_k}.$ \label{a2l:sample_dist}
\State \Return $\bbx_{\mathrm{rand}}$
\end{algorithmic}
\end{algorithm}
\textbf{Computing an Initial Path:} In what follows we provide a more detailed description of Alg. \ref{alg:cp-RRT*}.
First, using the path predictor $\sigma$,  we compute a path $\bbp_{1:L_p} = \bbp(1), \bbp(2), \dots, \bbp(L_p)$, where $\bbp(k) \in \ccalX$ for $k\in\{1, \dots, L_p\}$, $\bbp(1) = \bbx_{\text{init}}$ and $\bbp(L_p) \in \ccalX_{\text{goal}}$ [line  \ref{rrtline:get-initial-path}, Alg. \ref{alg:cp-RRT*}]. 
As discussed in Section \ref{sec:meth-cp}, this path may be infeasible and approximate. Given this predicted path, we compute its associated prediction sets $\ccalC_k$ defined in \eqref{eqn:decomposed_pred_reg} [line  \ref{rrtline:construct-pred-sets}, Alg. \ref{alg:cp-RRT*}]. %

\textbf{Tree Construction:} At every iteration $n$, we sample $\bbx_{\text{rand}}\in\ccalX$ from a distribution that leverages the prediction sets in \eqref{eqn:decomposed_pred_reg}. The design of this distribution, which differentiates our algorithm from RRT* [lines \ref{rrtline:get-initial-path}, \ref{rrtline:construct-pred-sets}, \ref{rrtline:sample}, Alg. \ref{alg:cp-RRT*}], is discussed later.
Given a sample $\bbx_{\text{rand}}$, the tree is updated as in the RRT* algorithm. Specifically, first the \textsc{Nearest} operation identifies the vertex $\bbx_{\mathrm{nearest}} \in \ccalV$ closest to $\bbx_{\mathrm{rand}}$ under a given cost metric [line \ref{rrtline:nearest}, Alg. \ref{alg:cp-RRT*}]. Given $\bbx_{\mathrm{nearest}}$, the \textsc{Steer} function generates a control input $\bbu_{\text{new}}$ moving the robot from $\bbx_{\mathrm{nearest}}$ to a new state $\bbx_{\mathrm{new}}$ that is closer to $\bbx_{\mathrm{rand}}$ than $\bbx_{\mathrm{nearest}}$ is, while taking into account the robot dynamics~\eqref{eq:rdynamics} [line \ref{rrtline:steer}, Alg. \ref{alg:cp-RRT*}]. Then, the \textsc{ObstacleFree} function verifies that the steered trajectory from $\bbx_{\mathrm{nearest}}$ to  $\bbx_{\mathrm{new}}$ does not intersect any obstacles [line \ref{rrtline:obstaclefree-check}, Alg. \ref{alg:cp-RRT*}]. If this trajectory is collision-free, a set of neighbors $\mathcal{X}_{\mathrm{near}}$ is computed by \textsc{Near} [line  \ref{rrtline:near-nodes}, Alg. \ref{alg:cp-RRT*}], among which a parent is selected through \textsc{ChooseParent} that minimizes the path cost from the root $\bbx_\mathrm{init}$ to $\bbx_{\mathrm{new}}$ [line  \ref{rrtline:choose-parent}, Alg. \ref{alg:cp-RRT*}]. The \textsc{Extend} step adds $\bbx_{\mathrm{new}}$ with parent $\bbx_{\mathrm{parent}}$ to the tree $\mathcal{T}$ [line  \ref{rrtline:extend-tree}, Alg. \ref{alg:cp-RRT*}], and the \textsc{Rewire} step updates the parents of neighbors if the cost of reaching them through $\bbx_{\mathrm{new}}$ gets lower [line  \ref{rrtline:rewire}, Alg. \ref{alg:cp-RRT*}]. The algorithm runs for $N$ iterations. If a tree node lies within $\ccalX_{\mathrm{goal}}$, then backtracking from it to the root gives a feasible solution. Among all feasible solutions, the one with lowest cost is returned. %

\textbf{Non-uniform Sampling Strategy:} To accelerate the computation of a trajectory solving Problem \ref{pr:pr1}, we design a non-uniform sampling strategy for drawing samples $\bbx_{\text{rand}}$; see Alg. \ref{alg:sample}. Our strategy  guides sampling toward regions that contain the optimal solution with high probability by leveraging the prediction sets $\ccalC_k$, $k \in \{1,\dots,L_p\}$, defined in \eqref{eqn:decomposed_pred_reg}.
Specifically, first, we compute the sets $\ccalA_k = \ccalX_{\text{free}} \cap \ccalC_k$, for all $k \in \{1,\dots,L_p\}$ [line \ref{a2l:create_a}, Alg. \ref{alg:sample}]; see Fig. \ref{fig:pred_regions}. %
Second, given the sets $\ccalA_k$, we select an index $k \in \{1,\dots,L_p\}$ [line \ref{a2l:select_k}, Alg. \ref{alg:sample}]. This selection can be made arbitrarily—for example, by choosing $k$ uniformly at random, or by cycling deterministically through all possible values as the number $n$ of iterations increases. Third, conditioned on the selected index $k$, we draw a sample from the following distribution:
\begin{equation}
    \bbx_{\mathrm{rand}} \mid K\!=\!k \sim (1-p_{\mathrm{bias}})\,\mathbb{U}_{\ccalX_{\text{free}}} + p_{\mathrm{bias}}\,\mathbb{U}_{\ccalA_k},
\end{equation}
\noindent where $\mathbb{U}_{\ccalA_k}$ and $\mathbb{U}_{\ccalX_{\text{free}}}$ denote the uniform distributions over $\ccalA_k$ and $\ccalX_{\text{free}}$, respectively, and $p_{\mathrm{bias}}\in[0,1)$ is a user-defined parameter [line \ref{a2l:sample_dist}, Alg. \ref{alg:sample}]. Increasing $p_{\mathrm{bias}}$ strengthens the bias toward sampling states in $\ccalA_k$, while setting $p_{\mathrm{bias}} = 0$ recovers the uniform distribution over $\ccalX_{\text{free}}$.

\begin{figure}[t]
  \centering
  \begin{subfigure}[b]{0.49\columnwidth}\centering
    \includegraphics[width=\linewidth]{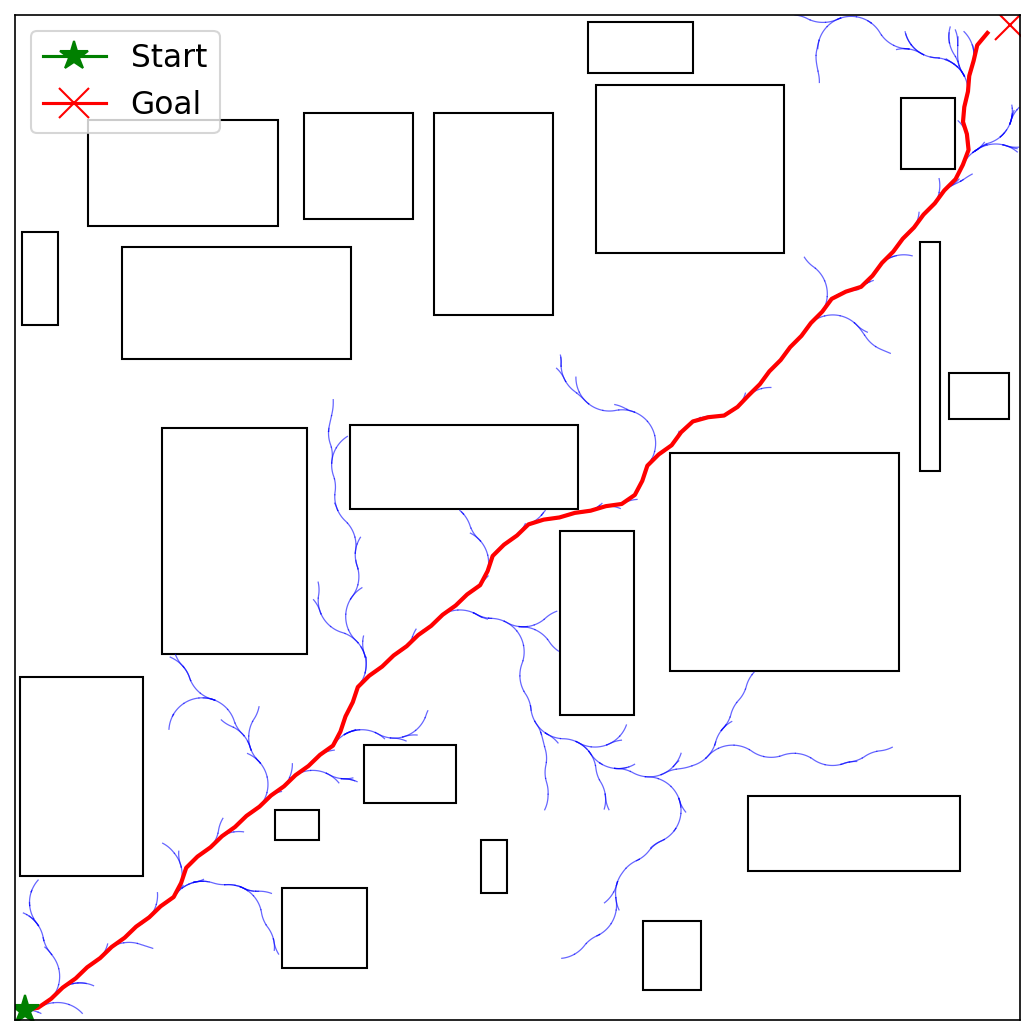}
    \caption{30\% density}
  \end{subfigure}
  \begin{subfigure}[b]{0.49\columnwidth}\centering
    \includegraphics[width=\linewidth]{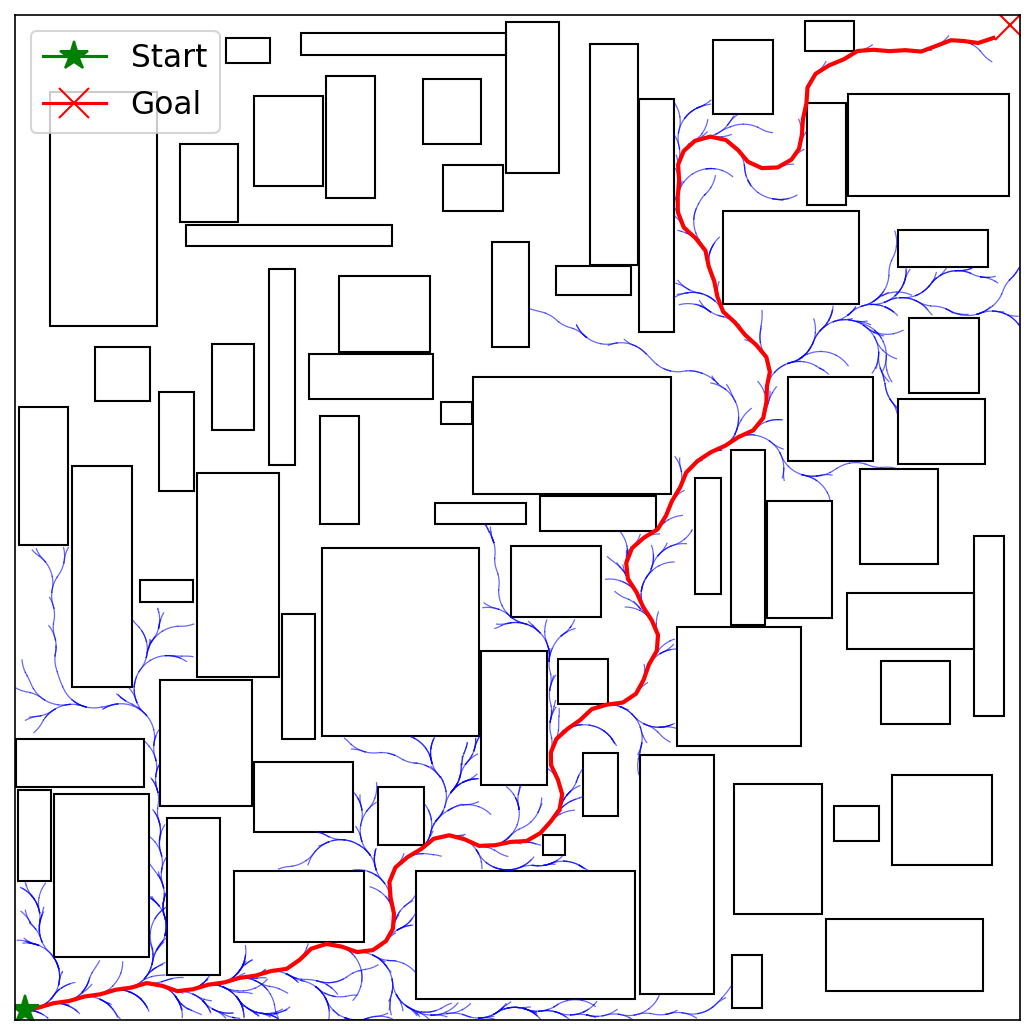}
    \caption{50\% density}
  \end{subfigure}
  \caption{Representative environments for 30\% and 50\% obstacle density classes with overlaid CP-RRT* trees.}%
  \label{fig:env_panels}
\end{figure}

\section{Experiments}
\label{sec:experiments}

This section evaluates CP-RRT* against heuristic and learning-enabled baselines. We first detail the experimental setup and the considered baselines (Section \ref{sec:exp-setup}). We then present comparative runtime experiments for both in-distribution (Section \ref{sec:exp-time}) and out-of-distribution settings (Section \ref{sec:exp-generalization}). We conclude with an evaluation of the effect of key planning parameters (Section \ref{sec:exp-ablation}) and demonstrate using a VLM as an alternate path predictor (Section \ref{sec:exp-predictors}).

\subsection{Setting Up CP-RRT* and Baselines}
\label{sec:exp-setup}

\textbf{System Dynamics:} We evaluate our approach using three robot models:
(i) A holonomic robot, with dynamics $\dot{\mathbf{x}}(t) = \mathbf{u}(t)$, where the state $\mathbf{x} = [x, y]^\top \in \mathbb{R}^2$ represents the robot’s planar position, and the control input $\mathbf{u} = [u_x, u_y]^\top \in \mathbb{R}^2$ corresponds to linear velocity.
(ii) A curvature-constrained Dubins vehicle \cite{dubins1957curves} with unit speed, where the state $\mathbf{x} = [x, y, \theta]^\top \in \mathbb{R}^3$ encodes planar position and heading, and the control $u \in \mathbb{R}$ denotes curvature, constrained by $|u| \le \kappa_{\max}$. The dynamics are given by $\dot{\mathbf{x}} = [\cos\theta, \sin\theta, u]^\top$. The steering function is implemented  using the Dubins Curves library \cite{DubinsCurves} which uses well-established results from \cite{dubins1957curves}. (iii) A 5-D kinematic car model, with state $\mathbf{x} = [x, y, \theta, v, \kappa]^\top \in \mathbb{R}^5$ capturing planar position $(x,y)$, heading $\theta$, forward speed $v$, and curvature $\kappa$. The control input $\mathbf{u} = [u_v, u_\kappa]^\top\in \mathbb{R}^2$ represents acceleration and curvature rate, and the dynamics are $\dot{\mathbf{x}} = [v\cos\theta, v\sin\theta, v\kappa, u_v, u_\kappa]^\top$. Local steering is achieved using an LQR controller following \cite{perez2012lqr}.

\textbf{Distribution of Planning Problems:} To assess how the performance of our algorithm is affected by the size of the obstacle space, we consider five distributions over planning problems $\ccalM$, denoted by $\ccalD_i$ that differ only in the obstacle density of the environment they generate.
Specifically, each $\ccalD_i$ generates 2D environments of size $100\text{m} \times 100\text{m}$, where $i\%$ of the area is occupied by obstacles. We consider obstacle densities $i \in \{10, 20, 30, 40, 50\}$. 
For all distributions, the initial state is fixed to the zero vector, and the goal region includes all states whose planar position $(x, y)$ lies within $3\text{m}$ of $(100, 100)$, the top right corner of the environment. %

\textbf{Path Predictor:} To evaluate how the choice of path predictor affects the performance of CP-RRT*, we consider two implementations of $\sigma$:
(a) A* on a unit grid with 8-connected moves and a Euclidean heuristic (see Sections \ref{sec:exp-time}–\ref{sec:exp-ablation}), and
(b) a Vision-Language Model (VLM)-based planner (see Section \ref{sec:exp-predictors}).
Both predictors ignore system dynamics and output paths as sequences of planar positions $(x, y)$, neglecting other components of the robot’s state. Consequently, for robot models (ii)–(iii), the generated paths may not be dynamically feasible and should be regarded as approximate, as they omit non-positional state variables (see Section \ref{sec:meth-cp}).
For model (i), A* yields feasible but not necessarily optimal paths. Moreover, A* guarantees obstacle-free paths by construction for all models. In contrast, the VLM-based predictor may generate lower-quality paths as they may collide with obstacles, across all robot models, due to their lack of correctness guarantees. 

\textbf{Conformal Prediction Setup:}
We constructed five calibration datasets, one for each distribution $\ccalD_i$, each consisting of $50$ planning problems sampled from $\ccalD_i$ and paired with their corresponding approximate optimal solutions (instead of the true ones required in~\eqref{eqn:score}) generated using RRT*. Computing the true optimal solutions is computationally intractable, as RRT* requires an infinite number of iterations to converge. Therefore, following Remark~\ref{rem:calibNCS}, we approximate the optimal solutions by running RRT* for $20,000$ iterations.

\textbf{Baselines:} We evaluate the performance of CP-RRT* against three variants of RRT* that differ only in their sampling strategies:
(a) a uniform sampling strategy,
(b) a Gaussian goal-biased strategy that samples from a Gaussian distribution---centered at the goal region and a diagonal matrix covariance matrix where all diagonal entries are equal to $10^2$---with probability $0.1$ and uniformly otherwise~\cite{lavalle1998rapidly}, and
(c) a learning-based strategy that samples from a Conditional Variational Autoencoder (CVAE) model with probability $0.5$ and uniformly otherwise~\cite{ichter2018learning}. We refer to these planners as RRT*, GB-RRT* (Goal-Biased RRT*), and CVAE-RRT*, respectively. All methods use the same parameters and share identical implementations for all other functions (e.g., steering, extending, and rewiring) for each robot model, differing only in their sampling strategy. GB-RRT* biases sampling toward the goal region, a common heuristic that can improve planning efficiency. CVAE-RRT* is a learning-based baseline that has shown strong performance but requires training a CVAE model to generate samples. %
Five separate CVAE models are trained—one for each distribution $\ccalD_i$—using environments sampled from the corresponding $\ccalD_i$ and following the authors’ released code.

\begin{figure*}[t]
 \centering
 \includegraphics[width=0.95\textwidth]{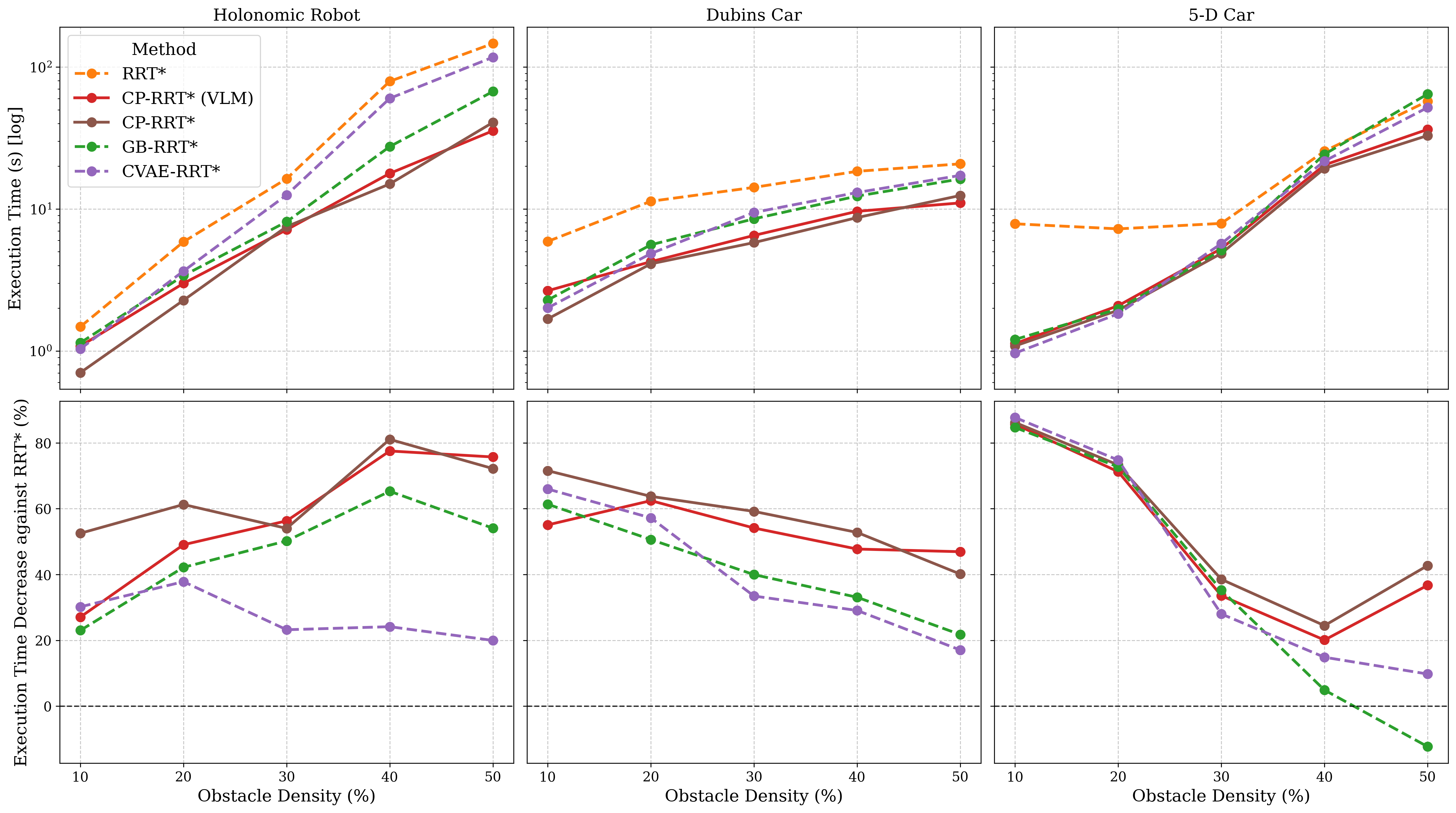}%
 \caption{Comparison of CP-RRT* against baselines in different obstacle density scenarios in terms of raw execution times (top) and the percentage of improvement in  execution time against RRT* for each set of dynamics (bottom). The first, second, and third column correspond to the holonomic robot, Dubins car, and 5-D car, respectively.}%
\label{fig:cs1-time}
\end{figure*}

\textbf{Evaluation Metrics:} Each method is executed $30$ times per planning problem, over a test set of $50$, $40$, and $20$ planning problems for robot models (i)–(iii), respectively. We consider two main evaluation metrics:
(a) the average runtime required to compute the first feasible solution, and
(b) the success rate, defined as the percentage of runs that found a feasible solution within $20,000$ iterations. We also report the average cost of the first-found paths generated by each method.
For CVAE-RRT*, the reported runtime excludes the time required to train and load the CVAE model. For CP-RRT* paired with a VLM, the reported runtime excludes the VLM inference time used to generate the initial path. This choice is deliberate because (i) VLMs are not necessarily designed for fast inference and their runtime is model-specific, and (ii) our experiments in Section~\ref{sec:exp-predictors} aim to assess how CP-RRT*’s performance changes when initialized with lower-quality paths compared to those produced by A*, rather than to evaluate the VLM’s efficiency as a path predictor. %
Our reported times for CP-RRT* paired with A* include all computational components of Alg. \ref{alg:cp-RRT*}.

\subsection{Comparative Experiments: In-Distribution Settings}
\label{sec:exp-time}
We compare CP-RRT* (paired with A*) against the baselines discussed in Section~\ref{sec:exp-setup} across environments of increasing obstacle density, generated by the distributions~$\ccalD_i$. Our method is configured with $\alpha=0.1$ and $p_{\text{bias}}=0.5$; the effect of these parameters is analyzed separately in Section~\ref{sec:exp-ablation}.
Figure~\ref{fig:cs1-time} summarizes the results, reporting the average runtimes of all methods (top row) and the percentage of improvement achieved by each non-uniform sampling strategy over uniform sampling (bottom row) for the three robot models (i)--(iii).\footnote{Absolute runtimes are not comparable across dynamics due to independent implementations. However, comparisons for a fixed robot model are valid, as all planners share identical code except for their sampling strategy.} Overall, CP-RRT* consistently outperforms all baselines, with performance gains that become more pronounced as obstacle density increases. The success rates of our method and all baselines are $100\%$ across all robot models and planning problems.

\noindent\textbf{Holonomic robot:} For holonomic dynamics, CP-RRT* reduces the average planning time by 50–80\% relative to RRT* across all densities, and by 8–46\% relative to the best-performing non-uniform baseline. \\
\noindent\textbf{Dubins car:} Similar trends are observed for the Dubins car, where CP-RRT* achieves 40–75\% lower average runtime than RRT* and remains 8–46\% faster than the strongest non-uniform baseline. The performance gap widens with increasing obstacle density.\\
\noindent\textbf{5-D car model:} For the higher-dimensional car model, CP-RRT* again provides substantial gains, reducing average runtime by 24–86\% compared to uniform RRT*. At low obstacle densities (10--20\%), performance is comparable to other non-uniform baselines, but at higher densities, CP-RRT* outperforms them by 6--37\%. Interestingly, GB-RRT* performs worse than RRT* at 50\% obstacle density, suggesting that heuristic goal biasing may require careful tuning as environment complexity and dynamics vary. Across all models and densities, all methods achieve 100\% success rates and produce paths of comparable quality, with average costs differing by less than 5\%.

\begin{figure}[t]
    \centering
    \includegraphics[width=0.7\columnwidth]{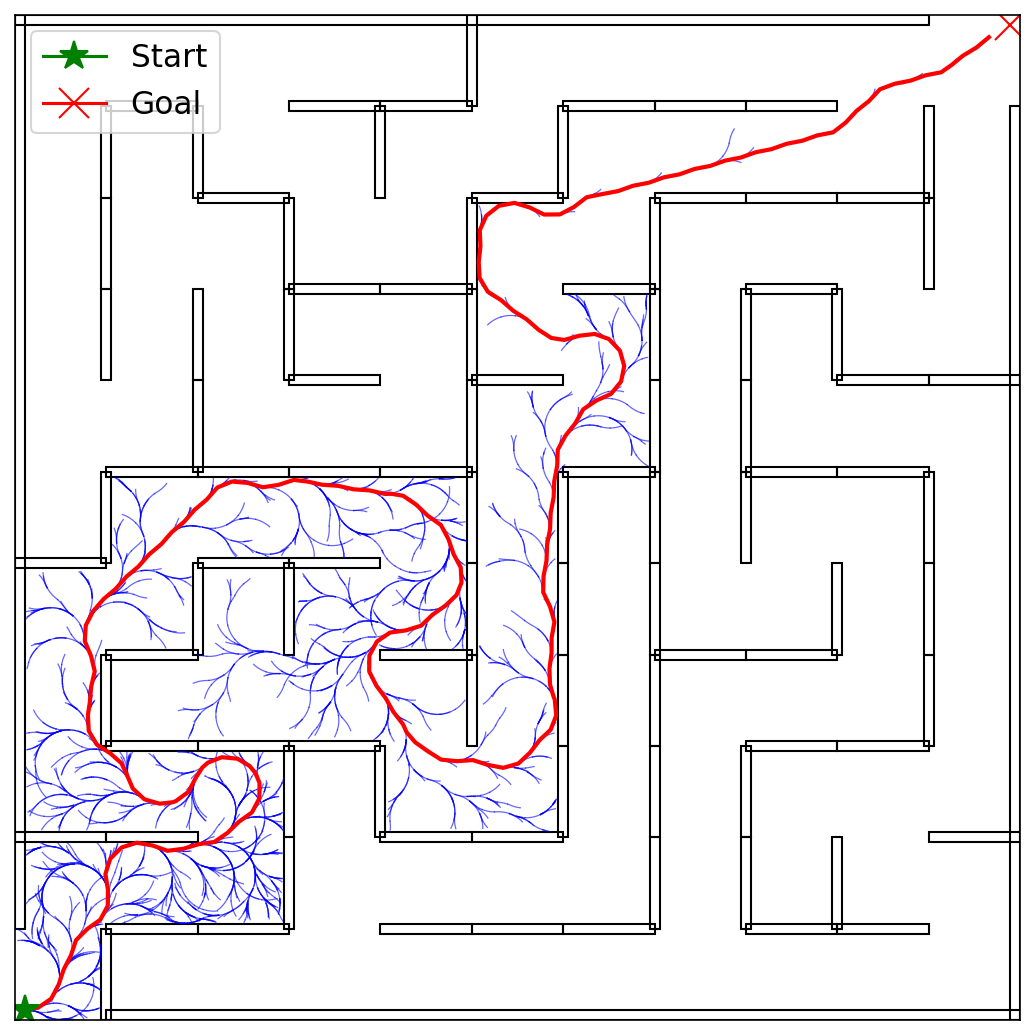}%
    \caption{Example of a maze environment along with a tree constructed by CP-RRT*.}%
    \label{fig:maze}
\end{figure}

\subsection{Comparative Experiments: Out-Of-Distribution Settings}
\label{sec:exp-generalization}

Next, we evaluate our method in $10$ randomly generated maze environments (\textcolor{black}{see Fig.~\ref{fig:maze}}) that are out-of-distribution (OOD), as they are not generated by the distributions $\ccalD_i$ and, consequently, the guarantee in~\eqref{eqn:guarantee} may not hold.
CP-RRT* is configured using the quantile computed over calibration data generated by $\ccalD_{10}$, while CVAE-RRT* uses the model trained on environments sampled from $\ccalD_{10}$. GB-RRT* remains unchanged from Section~\ref{sec:exp-time}, as it is distribution-agnostic.

\begin{table}[t]
\centering
\begin{tabular}{l cc cc}
\toprule
& \multicolumn{2}{c}{\textbf{Dubins Car}} & \multicolumn{2}{c}{\textbf{5-D Car}} \\
\cmidrule(lr){2-3} \cmidrule(lr){4-5}
\textbf{Method} & \textbf{Improv.} & \textbf{Success} & \textbf{Improv.} & \textbf{Success} \\
\midrule
RRT$^*$        & --            & 98\%  & --            & 45\% \\
GB-RRT$^*$     & 1.8\%           & 100\% & -15.36\%           & 40\% \\
CVAE-RRT$^*$   & 4.2\%           & 100\% & 2.6\%          & 48\% \\
CP-RRT$^*$     & \textbf{24.1}\% & 100\% & \textbf{14.82}\% & \textbf{65}\% \\
\bottomrule
\end{tabular}
\footnotesize
\caption{Comparative experiments in OOD settings.
}
\label{tab:generalization}
\end{table}
We compare CP-RRT* (paired with A*) against CVAE-RRT* and GB-RRT* for the Dubins and 5-D car models. Each planner is executed $20$ times per test environment with a maximum of $20,000$ iterations, and we report the percentage improvement over RRT* and the success rate of finding a solution. The results are summarized in Table~\ref{tab:generalization}.
In these OOD settings, our method demonstrates significant performance gains compared to the baselines. For the Dubins car, CP-RRT* reduces planning time by 24.1\% relative to RRT* while maintaining a 100\% success rate. GB-RRT* and CVAE-RRT* offer only marginal improvements over RRT*, with maximum gains below 5\%. The advantage is even more pronounced for the 5-D car: CP-RRT* achieves a 14.8\% speedup over RRT* and the highest success rate (65\%), whereas GB-RRT* and CVAE-RRT* perform comparably to RRT*, with success rates not exceeding 48\%.

\begin{figure*}[t]
 \centering
 \includegraphics[width=0.95\textwidth]{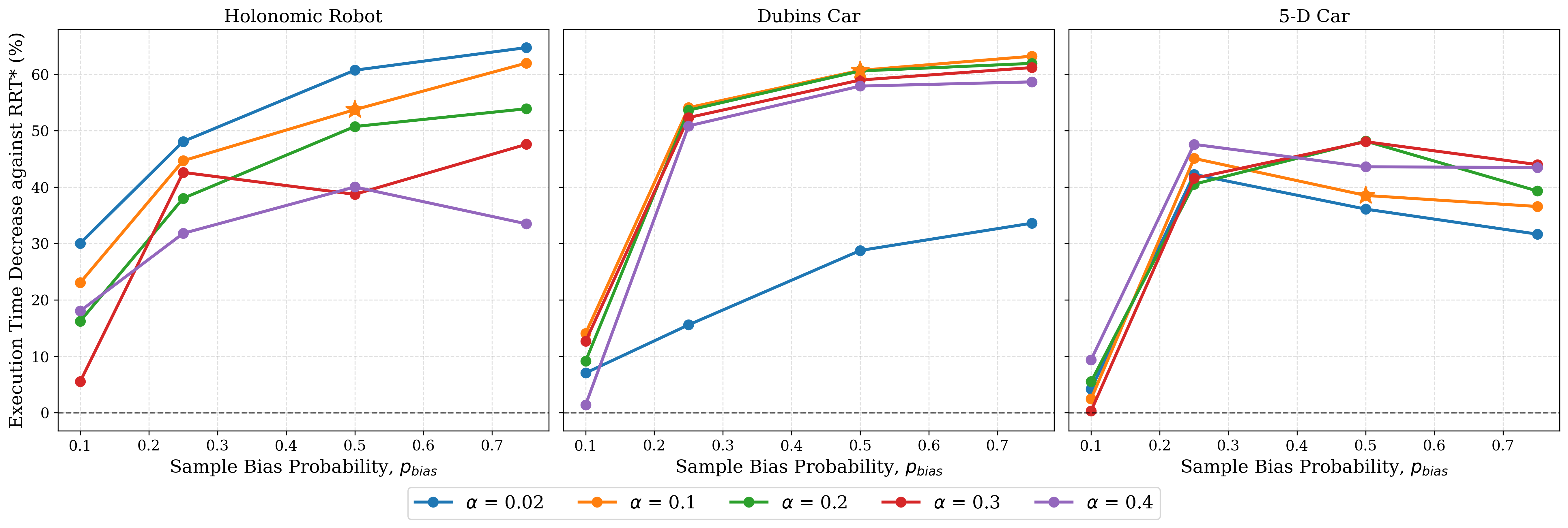}%
 \caption{Evaluation of CP-RRT* across different values of $p_{\text{bias}}$ and $\alpha$ in terms of execution time for the 30\% obstacle-density case. The star marker denotes the parameter setting used in Section \ref{sec:exp-time}/Fig. \ref{fig:cs1-time}.}%
 \label{fig:cs1-params}
\end{figure*}
\subsection{Effect of Planning Parameters}
\label{sec:exp-ablation}

We next analyze how the parameters $\alpha$ and $p_{\text{bias}}$ influence the performance gains of CP-RRT* (paired with A*) relative to RRT* in environments generated by $\ccalD_{30}$ for each  model (i)–(iii). The results are summarized in Fig.~\ref{fig:cs1-params} for various values of $\alpha$ and $p_{\text{bias}}$.
\textcolor{black}{Across all models, we observe that for a fixed $\alpha$, decreasing $p_{\text{bias}}$ leads to smaller performance gains over RRT*. This is expected, as when $p_{\text{bias}}\to 0$, our sampling strategy gradually degenerates to uniform sampling.}

\noindent\textbf{Holonomic robot:}
For the holonomic robot, given a fixed $\alpha \in \{0.02, 0.1, 0.2, 0.3\}$, increasing $p_{\text{bias}}$ improves performance over RRT*. A similar trend holds for $\alpha = 0.4$, except that performance drops when $p_{\text{bias}} = 0.75$. We attribute this drop to the fact that $\alpha = 0.4$ produces less informative prediction sets that contain the optimal solution with only $0.6$ probability. Moreover, for a fixed $p_{\text{bias}}$, lower values of $\alpha$ generally yield better performance, since smaller $\alpha$ values correspond to more informative prediction regions that are likely to include the optimal solution with higher probability $1 - \alpha$. We note that this trend does not hold indefinitely: excessively small $\alpha$ values can produce overly large prediction sets, causing our sampling strategy to approach uniform sampling regardless of $p_{\text{bias}}$.

\noindent\textbf{Dubins car:}  Similar trends are observed for the Dubins car model. The main difference is that CP-RRT*’s performance appears less sensitive to variations in $\alpha$ for a fixed $p_{\text{bias}}$. Also, observe that $\alpha = 0.02$ produces a smaller performance improvement than other $\alpha$ values. This is explained by the reasoning presented above as arbitrarily small $\alpha$ values induce large prediction sets. The best performance improvement ($\approx 64\%$) is achieved when $\alpha = 0.1$ and $p_{\text{bias}} = 0.75$, which is comparable to the best configuration for the holonomic robot ($\alpha = 0.2$, $p_{\text{bias}} = 0.75$).

\textbf{5-D car model:} For the 5-D car model, the observed trends deviate somewhat from those seen in the lower-dimensional cases. The best performance improvement ($\approx 48\%$) occurs when $\alpha = 0.4$ and $p_{\text{bias}} = 0.25$, with comparable results for $\alpha \in \{0.2, 0.3\}$ and $p_{\text{bias}} = 0.25$. We attribute this behavior to the CP setup. As discussed in Section~\ref{sec:exp-setup}, CP provides guarantees only for the 2D positional component of the robot state by construction of the initial paths. In the 5-D model, three additional state variables are ignored when forming the prediction sets, making them less informative than in the previous two models. This likely explains why performance decreases once $p_{\text{bias}}$ exceeds $0.25$, as overly emphasizing the biased sampling toward those regions may reduce exploration in the remaining three uncalibrated dimensions. As in other cases, $\alpha=0.02$ tends to yield the smallest performance improvement. In this setting, a moderate bias ($p_{\text{bias}} = 0.25$ with $\alpha \in \{0.2, 0.3, 0.4\}$) strikes the best balance between informative 2D guidance and sufficient exploration in the remaining state dimensions. Nevertheless, even for larger $p_{\text{bias}}$ values, CP-RRT* presents substantial gains—exceeding 35\% improvement over RRT* in all cases.

\subsection{Vision-Language Models as Path Predictors}
\label{sec:exp-predictors}

Finally, we study the effect of the path predictor on the performance of our method. These comparative results are presented in Figure \ref{fig:cs1-time}. In addition to A*, we test CP-RRT* with predicted paths produced by Google Gemini Pro, a pre-trained VLM. The VLM receives a rendered map image and a textual problem description that describes the planning problem (i.e, scenario, start state, end state, and obstacle geometry with coordinates). The VLM outputs an ordered sequence of states from start to goal. 
Note that the inference time of the VLM (6.73 secs on average) is higher than that of A* (0.29 secs). However, this runtime can be reduced by using smaller, fine-tuned models. We observe that although the VLM provides lower quality paths, as discussed in Section \ref{sec:exp-setup}, it still results in substantial performance gains comparable to A*. For instance, relative to RRT*, the VLM-guided CP-RRT* planner reduces mean execution time by 25--75\% for the holonomic and Dubins models, and by 20--86\% for the 5-D car model. When compared against the strongest non-CP baseline at each obstacle density, CP-RRT* (VLM) is initially comparable or slightly slower in low-density scenarios but becomes significantly faster as environmental clutter increases, with improvements ranging from 6\% to 47\% in high-density cases. Furthermore, planning time variability is also considerably lower, with standard deviations falling by up to 85\% relative to RRT*.

\section{Conclusion}
In this paper, we presented the first certified non-uniform sampling strategy for SBMPs that biases sampling toward regions that are, with user-specified probability, guaranteed to contain the
optimal path. Our comparative evaluations showed that our proposed method results in significant performance gains compared to existing baselines in terms of runtimes to compute the first feasible solution and generalization to unseen and OOD environments.

\bibliographystyle{ieeetr}
\bibliography{IEEEabrv, IEEEexample}

@article{lavalle2001randomized,
  title={Randomized kinodynamic planning},
  author={LaValle, Steven M and Kuffner Jr, James J},
  journal={The international journal of robotics research},
  volume={20},
  number={5},
  pages={378--400},
  year={2001},
  publisher={SAGE Publications}
}

@article{karaman2011sampling,
  title={Sampling-based algorithms for optimal motion planning},
  author={Karaman, Sertac and Frazzoli, Emilio},
  journal={The international journal of robotics research},
  volume={30},
  number={7},
  pages={846--894},
  year={2011},
  publisher={Sage Publications Sage UK: London, England}
}

@article{wang2020neural,
  title={Neural RRT*: Learning-based optimal path planning},
  author={Wang, Jiankun and Chi, Wenzheng and Li, Chenming and Wang, Chaoqun and Meng, Max Q-H},
  journal={IEEE Transactions on Automation Science and Engineering},
  volume={17},
  number={4},
  pages={1748--1758},
  year={2020},
  publisher={IEEE}
}

@article{wang2024probabilistically,
  title={Probabilistically Correct Language-Based Multi-Robot Planning Using Conformal Prediction},
  author={Wang, Jun and He, Guocheng and Kantaros, Yiannis},
  journal={IEEE Robotics and Automation Letters},
  year={2024},
  publisher={IEEE}
}

@inproceedings{ichter2018learning,
  title={Learning sampling distributions for robot motion planning},
  author={Ichter, Brian and Harrison, James and Pavone, Marco},
  booktitle={2018 IEEE International Conference on Robotics and Automation (ICRA)},
  pages={7087--7094},
  year={2018},
  organization={IEEE}
}

@inproceedings{hsu2003bridge,
  title={The bridge test for sampling narrow passages with probabilistic roadmap planners},
  author={Hsu, David and Jiang, Tingting and Reif, John and Sun, Zheng},
  booktitle={2003 IEEE international conference on robotics and automation (cat. no. 03CH37422)},
  volume={3},
  pages={4420--4426},
  year={2003},
  organization={IEEE}
}

@article{lavalle1998rapidly,
  title={Rapidly-exploring random trees: A new tool for path planning},
  author={LaValle, Steven},
  journal={Research Report 9811},
  year={1998},
  publisher={Department of Computer Science, Iowa State University}
}

@article{gammell2018informed,
  title={Informed sampling for asymptotically optimal path planning},
  author={Gammell, Jonathan D and Barfoot, Timothy D and Srinivasa, Siddhartha S},
  journal={IEEE Transactions on Robotics},
  volume={34},
  number={4},
  pages={966--984},
  year={2018},
  publisher={IEEE}
}

@inproceedings{molina2020learn,
  title={Learn and link: Learning critical regions for efficient planning},
  author={Molina, Daniel and Kumar, Kislay and Srivastava, Siddharth},
  booktitle={2020 IEEE International Conference on Robotics and Automation (ICRA)},
  pages={10605--10611},
  year={2020},
  organization={IEEE}
}

@inproceedings{zucker2008adaptive,
  title={Adaptive workspace biasing for sampling-based planners},
  author={Zucker, Matt and Kuffner, James and Bagnell, J Andrew},
  booktitle={2008 IEEE International Conference on Robotics and Automation},
  pages={3757--3762},
  year={2008},
  organization={IEEE}
}

@inproceedings{berenson2012robot,
  title={A robot path planning framework that learns from experience},
  author={Berenson, Dmitry and Abbeel, Pieter and Goldberg, Ken},
  booktitle={2012 IEEE International Conference on Robotics and Automation},
  pages={3671--3678},
  year={2012},
  organization={IEEE}
}

@article{van2005using,
  title={Using workspace information as a guide to non-uniform sampling in probabilistic roadmap planners},
  author={Van den Berg, Jur P and Overmars, Mark H},
  journal={The International Journal of Robotics Research},
  volume={24},
  number={12},
  pages={1055--1071},
  year={2005},
  publisher={SAGE Publications}
}

@inproceedings{yang2004adapting,
  title={Adapting the sampling distribution in PRM planners based on an approximated medial axis},
  author={Yang, Yuandong and Brock, Oliver},
  booktitle={IEEE International Conference on Robotics and Automation, 2004. Proceedings. ICRA'04. 2004},
  volume={5},
  pages={4405--4410},
  year={2004},
  organization={IEEE}
}

@inproceedings{qureshi2018deeply,
  title={Deeply informed neural sampling for robot motion planning},
  author={Qureshi, Ahmed H and Yip, Michael C},
  booktitle={2018 IEEE/RSJ International Conference on Intelligent Robots and Systems (IROS)},
  pages={6582--6588},
  year={2018},
  organization={IEEE}
}

@article{kavraki1996probabilistic,
  title={Probabilistic roadmaps for path planning in high-dimensional configuration spaces},
  author={Kavraki, Lydia E and Svestka, Petr and Latombe, J-C and Overmars, Mark H},
  journal={IEEE transactions on Robotics and Automation},
  volume={12},
  number={4},
  pages={566--580},
  year={1996},
  publisher={IEEE}
}

@inproceedings{burns2005sampling,
  title={Sampling-based motion planning using predictive models},
  author={Burns, Brendan and Brock, Oliver},
  booktitle={Proceedings of the 2005 IEEE international conference on robotics and automation},
  pages={3120--3125},
  year={2005},
  organization={IEEE}
}

@inproceedings{urmson2003approaches,
  title={Approaches for heuristically biasing RRT growth},
  author={Urmson, Chris and Simmons, Reid},
  booktitle={IEEE/RSJ International Conference on Intelligent Robots and Systems},
  volume={2},
  pages={1178--1183},
  year={2003}
}

@inproceedings{hsu1998finding,
  title={On finding narrow passages with probabilistic roadmap planners},
  author={Hsu, David and Kavraki, Lydia E and Latombe, Jean-Claude and Motwani, Rajeev and Sorkin, Stephen and others},
  booktitle={Robotics: the algorithmic perspective: 1998 workshop on the algorithmic foundations of robotics},
  pages={141--154},
  year={1998},
  organization={Citeseer}
}

@inproceedings{huang2024neural,
  title={Neural informed rrt*: Learning-based path planning with point cloud state representations under admissible ellipsoidal constraints},
  author={Huang, Zhe and Chen, Hongyu and Pohovey, John and Driggs-Campbell, Katherine},
  booktitle={2024 IEEE International Conference on Robotics and Automation (ICRA)},
  pages={8742--8748},
  year={2024},
  organization={IEEE}
}

@article{qureshi2016potential,
  title={Potential functions based sampling heuristic for optimal path planning},
  author={Qureshi, Ahmed Hussain and Ayaz, Yasar},
  journal={Autonomous Robots},
  volume={40},
  pages={1079--1093},
  year={2016},
  publisher={Springer}
}

@inproceedings{baldwin2010non,
  title={Non-parametric learning for natural plan generation},
  author={Baldwin, Ian and Newman, Paul},
  booktitle={2010 IEEE/RSJ International Conference on Intelligent Robots and Systems},
  pages={4311--4317},
  year={2010},
  organization={IEEE}
}

@article{lindemann2023safe,
  title={Safe planning in dynamic environments using conformal prediction},
  author={Lindemann, Lars and Cleaveland, Matthew and Shim, Gihyun and Pappas, George J},
  journal={IEEE Robotics and Automation Letters},
  volume={8},
  number={8},
  pages={5116--5123},
  year={2023},
  publisher={IEEE}
}

@article{angelopoulos2023conformal,
  title={Conformal prediction: A gentle introduction},
  author={Angelopoulos, Anastasios N and Bates, Stephen and others},
  journal={Foundations and Trends{\textregistered} in Machine Learning},
  volume={16},
  number={4},
  pages={494--591},
  year={2023},
  publisher={Now Publishers, Inc.}
}

@article{shafer2008tutorial,
  title={A tutorial on conformal prediction.},
  author={Shafer, Glenn and Vovk, Vladimir},
  journal={Journal of Machine Learning Research},
  volume={9},
  number={3},
  year={2008}
}

@article{janson2015fast,
  title={Fast marching tree: A fast marching sampling-based method for optimal motion planning in many dimensions},
  author={Janson, Lucas and Schmerling, Edward and Clark, Ashley and Pavone, Marco},
  journal={The International journal of robotics research},
  volume={34},
  number={7},
  pages={883--921},
  year={2015},
  publisher={SAGE Publications Sage UK: London, England}
}

@article{luo2021abstraction,
  title={An abstraction-free method for multirobot temporal logic optimal control synthesis},
  author={Luo, Xusheng and Kantaros, Yiannis and Zavlanos, Michael M},
  journal={IEEE Transactions on Robotics},
  volume={37},
  number={5},
  pages={1487--1507},
  year={2021},
  publisher={IEEE}
}

@article{su2024collaborative,
  title={Collaborative multi-object tracking with conformal uncertainty propagation},
  author={Su, Sanbao and Han, Songyang and Li, Yiming and Zhang, Zhili and Feng, Chen and Ding, Caiwen and Miao, Fei},
  journal={IEEE Robotics and Automation Letters},
  year={2024}
}

@inproceedings{dixit2024perceive,
  title={Perceive With Confidence: Statistical Safety Assurances for Navigation with Learning-Based Perception},
  author={Dixit, Anushri and Mei, Zhiting and Booker, Meghan and Storey-Matsutani, Mariko and Ren, Allen Z and Majumdar, Anirudha},
  booktitle={CoRL},
  year={2024}
}

@Misc{DubinsCurves,
  author = {Andrew Walker},
  title  = {Dubins-Curves: an open implementation of shortest paths for the forward only car},
  year   = {2008--},
  url    = "https://github.com/AndrewWalker/pydubins"
}

@article{dubins1957curves,
  title={On curves of minimal length with a constraint on average curvature, and with prescribed initial and terminal positions and tangents},
  author={Dubins, Lester E},
  journal={American Journal of mathematics},
  volume={79},
  number={3},
  pages={497--516},
  year={1957},
  publisher={JSTOR}
}

@inproceedings{perez2012lqr,
  title={LQR-RRT*: Optimal sampling-based motion planning with automatically derived extension heuristics},
  author={Perez, Alejandro and Platt, Robert and Konidaris, George and Kaelbling, Leslie and Lozano-Perez, Tomas},
  booktitle={2012 IEEE International Conference on Robotics and Automation},
  pages={2537--2542},
  year={2012},
  organization={IEEE}
}

@article{qureshi2020motion,
  title={Motion planning networks: Bridging the gap between learning-based and classical motion planners},
  author={Qureshi, Ahmed Hussain and Miao, Yinglong and Simeonov, Anthony and Yip, Michael C},
  journal={IEEE Transactions on Robotics},
  volume={37},
  number={1},
  pages={48--66},
  year={2020}
}

@article{song2025vl,
  title={Vl-tgs: Trajectory generation and selection using vision language models in mapless outdoor environments},
  author={Song, Daeun and Liang, Jing and Xiao, Xuesu and Manocha, Dinesh},
  journal={IEEE Robotics and Automation Letters},
  year={2025}
}

@article{hart1968formal,
  title={A formal basis for the heuristic determination of minimum cost paths},
  author={Hart, Peter E and Nilsson, Nils J and Raphael, Bertram},
  journal={IEEE transactions on Systems Science and Cybernetics},
  volume={4},
  number={2},
  pages={100--107},
  year={1968},
  publisher={IEEE}
}

@article{wang2025motion,
  title={Motion planning for hybrid dynamical systems: Framework, algorithm template, and a sampling-based approach},
  author={Wang, Nan and Sanfelice, Ricardo G},
  journal={The International Journal of Robotics Research},
  pages={02783649241312695},
  year={2025},
  publisher={SAGE Publications Sage UK: London, England}
}

\end{document}